\renewcommand{\vec}[1]{\boldsymbol{\mathbf{#1}}}
\renewcommand{\theta}{\vartheta}
\DeclareMathOperator*{\minimize}{minimize}
\newcommand{\black}{\color{black}}
\newtheorem{theorem}{Theorem}[section]
\newtheorem{lemma}{Lemma}[section]
\title{\LARGE \bf
Real-Time Generation of Near-Minimum-Energy Trajectories via Constraint-Informed Residual Learning 
}
\author{Domenico Dona'$^{1,*}$,  Giovanni Franzese $^{2}$, Cosimo Della Santina $^{2,3}$, Paolo Boscariol$^{4}$ and Basilio Lenzo$^{1}$
\thanks{This research was funded by Fondazione Aldo Gini (Call 2023). This research was conducted at TU Delft and the corresponding author acknowledges the Erasmus+ program for the financial support.}
\thanks{$^{*}$Corresponding author,  {\tt\small domenico.dona@phd.unipd.it}}
\thanks{$^{1}$Department of Industrial Engineering,
        University of Padua, 35131 Padova, Italy}%
\thanks{$^{2}$Department of Cognitive Robotics, ME,
Delft University of Technology, 2628 CD Delft, The Netherlands}
\thanks{$^{3}$Institute of Robotics and Mechatronics, German Aerospace Center (DLR), 82234 Oberpfaffenhofen, Germany}
\thanks{$^{4}$Department of Management and Engineering (DTG),
        University of Padua, 36100 Vicenza, Italy}
}
\begin{document}

\maketitle
\thispagestyle{empty}
\pagestyle{empty}

\begin{abstract}
Industrial robotics demands significant energy to operate, making energy-reduction methodologies increasingly important. Strategies for planning minimum-energy trajectories typically involve solving nonlinear optimal control problems (OCPs), which rarely cope with real-time requirements. In this paper, we propose a paradigm for generating near minimum-energy trajectories for manipulators by learning from optimal solutions. Our paradigm leverages a residual learning approach, which embeds boundary conditions while focusing on learning only the adjustments needed to steer a standard solution to an optimal one. 
Compared to a computationally expensive OCP-based planner, our paradigm achieves 87.3\% of the performance near the training dataset and 50.8\% far from the dataset, while being two to three orders of magnitude faster.
\end{abstract}

\section{Introduction}

The emergence of ``green'' policies in industry is crucial for mitigating climate change. In the context of robotics, enhancing the efficiency of robotic systems can significantly reduce energy consumption and operational costs \cite{carabin2017review}.

Trajectory Optimization (TO) has emerged as a cost-effective approach for reducing energy consumption of robotic systems, such as single-axis systems \cite{lenzo2024real}, quadrotors \cite{morbidi2021practical}, and manipulators \cite{field1996iterative}. 
TO is used to plan trajectories that minimize total energy expenditure by solving the associated Optimal Control Problem (OCP). However, closed-form solutions for OCPs are rarely available, requiring the use of numerical routines. Consequently, these numerical routines limit the Real-Time (RT) deployment of TO, confining its applicability to precomputed (off-line) trajectories, thus posing a significant limitation in adaptive scenarios.

In this paper, we address this limitation by introducing a surrogate model paradigm. Trained on precomputed optimal trajectories, the surrogate model inherently enables RT deployment.
\black
Instead of learning the whole solution, we learn only the quantity (or \textit{residual}) that steers a standard (or prior) solution toward the optimal one, leveraging the idea of \textit{residual learning}, as illustrated in Fig.~\ref{fig:abstract}. The key idea is that the solution of the planning problem for a  fixed-time-point-to-point motion is rapidly available, e.g., cubic law; what remains is the (residual) component required for achieving optimality.\\
By proper regressor design, the proposed paradigm (i) allows the embedding of Boundary Conditions (BCs) as hard constraints, (ii) requires less training data, and (iii) allows deciding the resolution of the solution. 
Moreover, by using probabilistic regressors, the performance is enhanced by selecting the solution that best satisfies the optimality requirements from multiple samples; the uncertainty measure is used in an Active Learning (AL) data aggregation routine. The idea is depicted in Fig.~\ref{fig:abstract2}. 
\begin{figure}[t!]
  \centering
\includegraphics[width=\linewidth]{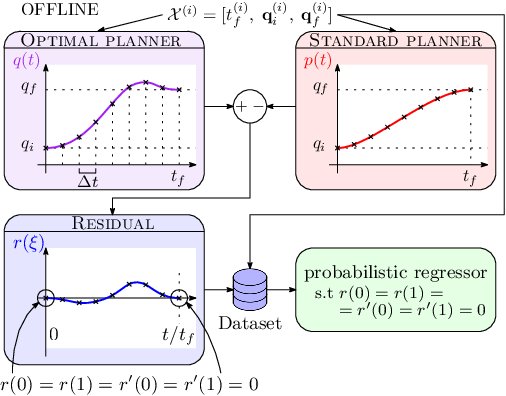}
  \caption{Schematic of the proposed \textit{residual} learning paradigm. Offline, a dataset of residuals is generated and used to train a specifically designed probabilistic regressor that embeds the boundary conditions.
  }
  \label{fig:abstract}
\end{figure}

In summary, this work proposes a \textbf{real-time} capable planner for (near) minimum-energy fixed-time-point-to-point planning problems. This is achievedusing a \textbf{residual learning} paradigm that allows the embedding of BCs as \textbf{hard constraints} and reduces the number of required training data. 
The proposed paradigm is tested using two probabilistic regression frameworks, namely Neural Network (NN) ensembles and Gaussian Processes.\\
The proposed paradigm is validated through simulated experiments on three robotic systems, namely a pendulum, a SCARA robot, and a six-axis manipulator

\begin{figure}[t!]
  \centering
  \includegraphics[width=\linewidth]{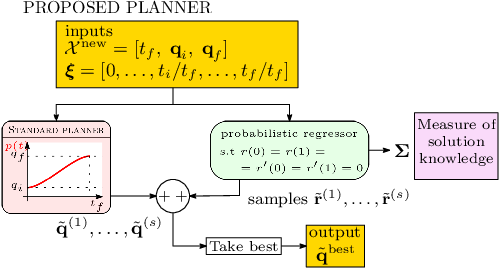}
  \caption{ The proposed planner: the trained probabilistic regressor is deployed online by summing its output with the standard planner output. Moreover, the regressor outputs the epistemic uncertainty of the solution, that can be used as a measure of the lack of knowledge (data).
  }
  \label{fig:abstract2}
\end{figure}
\section{Related Works}

Two major families of numerical methods for solving OCP can be distinguished: (i) direct and (ii) indirect methods \cite{bryson2018applied}. 
The former transforms the $\infty$-dimensional optimization problem into a Nonlinear Programming problem (NLP). Alternatively, the seconds formulate the problem as a Two-Point Boundary Value Problem (TPBVP) resulting from the Pontryagin Maximum Principle (PMP).

A main idea to speed up the generation process is to avoid the \textit{numerical routines} and to use surrogate models.
An early example can be seen in \cite{bemporad2000explicit}, where precomputed solutions were deployed online for a Model Predictive Control (MPC) application.
However, a clear structure for the solution is not always available. Recently, data-driven methods have emerged as a viable option for speeding up the TO problem \cite{hauser2016learning, sambharya2023end, cheng2018real, zang2022machine, tang2019data, sanchez2018real, zhu2019deep, tailor2019learning, ferede2024end, tang2018learning, banerjee2020learning, wu2024deep, berniker2015deep, Bency2019, hochreiter1997long, vaswani2017attention, guffanti2024transformers, celestini2024transformer, briden2023constraint}. 
The proposed regression models are trained to generate optimal trajectories, but different input-output choices are proposed in the literature. As such, we can identify four main categories: (i) indirect models, (ii) state feedback models, (iii) parameterized solution models.
Indirect models leverage the PMP ordinary differential equations to retrieve the solution. Given the initial and final state, the regressor predicts the initial co-state and the total time (if not provided as input).
Relevant examples are found in \cite{cheng2018real,tang2019data,zang2022machine} for spacecraft applications. However, to retrieve the trajectory, the TPBVP has to be solved, resulting in possible convergence issues given the quality of the predicted initial co-state \cite{bryson2018applied}.

State feedback models are based on the fact that the OCP solution is a function of the state. In \cite{sanchez2018real,zhu2019deep}, the state-feedback map was learned for the optimal landing problem using a Neural Network (NN), while quadcopter applications can be found in \cite{tailor2019learning, ferede2024end}. Different recurrent models can be found in the literature, based on LSTM architectures \cite{Bency2019} or Transformer models \cite{guffanti2024transformers, celestini2024transformer, briden2023constraint}. Note that, these examples do not guarantee BC satisfaction in the hard sense, resulting in possible unfeasible solutions.

Finally, parameterized modules output a complete sequence of the states and/or control inputs given the initial/final state and the total time. The resulting trajectory can also be used as the initial guess to warm up an optimizer \cite{tang2018learning, banerjee2020learning}.
For reducing the dimensionality of the output, an autoencoder can be used to learn a latent representation of the trajectory \cite{berniker2015deep}. 
However, these approaches suffer from the fact that either the length and/or the resolution of the trajectory has to be fixed to perform the regression.

In light of the above, learning solutions from data is promising for RT applications but existing methods: (i) do not impose BCs as hard constraints, (ii) do not leverage prior knowledge, (iii) still need numerical routines, or, (iv) have a predetermined length/resolution for the output. The next section introduces our proposed residual learning paradigm.

\section{METHOD}\label{sec:method}
\noindent The paradigm can be divided into the following steps:
\begin{enumerate}
    \item Formulate the minimum-energy problem as an OCP and solve it numerically to create a dataset of residuals.
    \item Define a structure for the regressors that (i) embeds the BCs for the residuals and (ii) gives uncertainty information of the output.
    \item When a new set of problem variables is given, generate a near-optimal solution using the selected regressor and (eventually) evaluate the uncertainty.
    \item If needed, upgrade the training set by generating new data based on uncertainty information, i.e., active learning.
\end{enumerate}

\subsection{Optimal Control Problem formulation}
We are interested in fixed-time-point-to-point (FT-PTP) trajectories for manipulators, as usually task time is decided externally by production constraints. The study case can be formalized as a fixed-time fixed-endpoint OCP. 
The functional that we want to minimize is the energy expenditure, i.e. the integral over time of the electrical power. We take as state the vector of configurations and velocities $\vec{x} = (\vec{q},\; \vec{v}) \in \mathbb{R}^{2n}$ and as control the torque exerted by the motors $\vec{u} = \vec{\tau} \in \mathbb{R}^{n}$, where $n$ is the number of degrees-of-freedom of the system. Using the DC equivalent circuit model, the electrical power takes the following quadratic form:
\begin{equation}\label{eq:cost}
    \mathcal{P} = \sum_i^n \mathcal{P}_i = \sum_i^n \left( r_i u_i^2 + v_i u_i \right) = \vec{u}^\intercal \vec{R} \vec{u} + \vec{u}^\intercal \vec{N} \vec{x} 
\end{equation}
where first term is due to Joule losses and the second is due to the mechanical power. The control weight matrix $\vec{R}$ is a diagonal matrix whose \textit{i-th} diagonal term is:
\begin{equation}
    r_i = \frac{R_a^i}{{k_t^i}^2}
\end{equation}
where $R_a^i$ is the armature resistance and $k_t^i$ the torque constant of the \textit{i-th} motor, respectively.
The mixed state-control term matrix $\vec{N}$ only selects the velocities from the state vector:
\begin{equation}
    \vec{N} = \begin{bmatrix}
        \mathbb{0}_n \\
        \mathbb{1}_n
    \end{bmatrix}
\end{equation}
where $\mathbb{0}_n$ and $\mathbb{1}_n$ are the $n \times n$ zero and identity matrices, respectively. With the convenient definitions above, it is possible to state the desired problem as following: 
\begin{subequations}\label{eq:problem}
\begin{align}
    \minimize_{\vec{x}(\cdot),\vec{u}(\cdot)} &\quad \mathcal{E} = \int_0^{t_f} \mathcal{P}(\vec{x}(t),\vec{u}(t)) \, \mathrm{d}t \label{eq:problem1a} \\
    \text{subject to:} &\quad  \vec{x}(0) = \vec{x}_0, \quad \vec{x}(t_f) = \vec{x}_f \label{eq:problem1b} \\
                &\quad \dot{\vec{x}} = \vec{f}(\vec{x},\vec{u},t) \label{eq:problem1c}
\end{align}
\end{subequations}
where $t_f$ is the prescribed total time, $\vec{x}_0 = (\vec{q}_0,\; \vec{v}_0)$ and $\vec{x}_f = (\vec{q}_f,\; \vec{v}_f)$ are the BCs. Since it is a PTP motion the velocities at the boundaries are zero ($\vec{v}_0 = \vec{0}$ and $\vec{v}_f = \vec{0}$).
The dynamics $\vec{f}$ can be explicited by splitting the position and velocity components as follows, 
\begin{equation}\label{eq:dynamics}
\dot{\vec{x}} = 
\begin{cases}
    \dot{\vec{q}} = \vec{v}  \\
    \dot{\vec{v}} = \texttt{FD}(\vec{q}, \vec{v}, \vec{u}) 
\end{cases}
\end{equation}
where \texttt{FD($\cdot$)} is the \texttt{F}orward \texttt{D}ynamics map. 

\subsection{Regressor design}
The \textit{explicit} solution of the OCP can be seen as a map from the inputs of the problem, i.e. the time $t$ and the trajectory parameters $\mathcal{X}$ to the trajectory $\vec{x}(t)$. In particular, $t \in [0,\; t_f]$ and $\mathcal{X} = (t_f,\;\vec{q}_0,\;\vec{q}_f)$.

In this work,  we parameterize the solution of the optimal control problem (OCP) in the \textit{explicit} manner where we focus solely on predicting the position over time, $\vec{q}(t)$, i.e., 
\begin{equation}
    \vec{q}(t) = \vec{g} (t, \mathcal{X}).
\end{equation}
This choice alleviates the curse of output dimensionality of the regressor; nevertheless, the velocity can still be retrieved numerically or analytically from the position.

Using a dataset of optimal trajectories, one can learn an approximation $\tilde{\vec{g}}$ for the mapping $\vec{g}$. However, this ``naive'' approach has the following problems\footnote{An ablation study to demonstrate the statement is reported in Sect.~\ref{sssec:ablation}}:
\begin{enumerate}
    \item The trajectories must be learned from scratch.
    \item There are no guarantees of BCs satisfaction.
    \item It requires a high amount of data.
\end{enumerate}
The above problems can be addressed using the \textit{residual learning} paradigm presented in this paper. A schematic of the paradigm is depicted in Fig.~\ref{fig:abstract}. 
\subsubsection{Residual Learning of explicit solutions}
The key idea is that we know how to plan a FT-PTP trajectory, but we do not know how to plan it optimally. Therefore, we want to learn only the \textit{residual} component that steers our standard (prior) solution towards the optimal one. This reduces the burden of the regressor by learning only a part of the solution.  Mathematically, let us define the \textit{prior} as $\vec{p}(t)$ and the \textit{residual} as $\vec{r}(t)$:
\begin{equation}
    \vec{q}(t) = \vec{p}(t) + \vec{r}(t)
\end{equation}
where the prior $\vec{p}(t)$ can be, for instance, a cubic polynomial that satisfies the BCs in \eqref{eq:problem1b}. 

\subsubsection{Boundary Conditions Satisfaction}

Since the prior satisfies the BCs, the residual must have zero values at the boundaries, i.e., 
\begin{equation}\label{eq:res}
    \vec{r}(0) = \vec{r}(t_f) =  \dot{\vec{r}}(0) = \dot{\vec{r}}(t_f) =\vec{0}.
\end{equation}
At this point, even if the values at the boundaries are the same, their position depends on the value of $t_f$. For overcoming this issue,  we can define the normalized time $\xi$ as the ratio between the physical time $t$ and the total time $t_f$, such that the boundaries are at $\xi = 0$ and $\xi = 1$. The mapping $\vec{r}(\cdot)$ can be approximated using a regressor $\tilde{\vec{r}}(\xi, \mathcal{X})$ that, according to \eqref{eq:res}, must satisfy the following:

\begin{equation}\label{eq:therule}
    \tilde{\vec{r}} \mid \tilde{\vec{r}}(0) = \tilde{\vec{r}}(1)  = \tilde{\vec{r}}'(0) = \tilde{\vec{r}}'(1) = \vec{0}
\end{equation}
where $\square'$ denotes the partial derivative with respect to $\xi$. The second argument has been omitted for brevity. Depending on the type of nonlinear regressor chosen, different techniques can be applied to satisfy \eqref{eq:therule}. In the following, NNs and GPs are considered, as they are two of the most established nonlinear regressors.

To enforce property \eqref{eq:therule} using a NN, it is sufficient to multiply the output of the NN element-wise by a ``scaling term'' $s$, as shown in Fig.~\ref{fig:nn}. One possible choice for $s$ is:
\begin{equation}\label{eq:scaling}
    s(\xi) = \xi^2 (1 - \xi)^2
\end{equation}

\begin{figure}[t!]
  \centering
  \includegraphics[width=\linewidth]{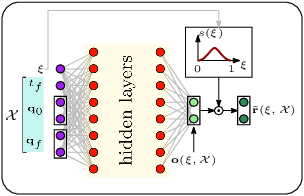}
  \caption{Proposed neural network scheme. The scaling function is applied element-wise.}
  \label{fig:nn}
\end{figure}

and calling $\vec{o}$ the output of the NN, then, the regressor:
\begin{equation}\label{eq:NNmod}
    \tilde{\vec{r}}(\xi, \mathcal{X}) = s(\xi) \odot \vec{o}(\xi, \mathcal{X})
\end{equation}
satisfies \eqref{eq:therule}. The symbol $\odot$ is used for element-wise multiplication.

Recalling the objectives outlined at the beginning of Section~\ref{sec:method}, at this stage, we lack uncertainty information for the output. This issue can be addressed by using an ensemble of neural networks, providing two main benefits: (i) estimation of the epistemic uncertainty using the standard deviation of outputs, and (ii) selection of the best solution (in energy sense) among the set of predictions. The ensemble approach introduces only a minor computational cost, as it can be easily parallelized.

Another common nonlinear regression model is given by GPs. Similar to NNs, GPs do not satisfy property \eqref{eq:therule} out-of-the-box. However, although the process is less straightforward, one can modify the GP kernel to enforce it.
To do so, we propose taking a general-purpose kernel $k_g(\cdot, \cdot)$, such as a Radial Basis Function (RBF), and multiplying it by two ``scaling'' functions $s(\cdot)$ that depends only on the normalized time $\xi$. This custom kernel is defined as:
\begin{equation}\label{eq:customkernel}
    k(\xi_1,\xi_2) = s(\xi_1) k_g(\xi_1,\xi_2) s(\xi_2); 
\end{equation}
with $s(\xi)$ being the scaling function introduced in \eqref{eq:scaling}. Here, $\xi_1$ and $\xi_2$ are the normalized time points at which the function is evaluated. The kernel \eqref{eq:customkernel} satisfies \eqref{eq:therule}, as proven in the Appendix. Unlike NNs, where an ensemble was needed to estimate uncertainty, this regressor inherently provides epistemic uncertainty information. Additionally, multiple solutions are sampled from the distribution, and the one with the lowest energy consumption is selected.

\subsubsection{Active Learning}\label{ssec:active} 
Our paradigm reduces data quantity by enforcing structure through the prior. However, solution quality may vary across the input space due to limited data, as data generation is costly. 
However, we can query the optimal planner as needed to update the dataset. We propose employing an AL strategy based on the idea that the further the input is from the existing dataset, the higher the uncertainty in the output. As a result, the epistemic uncertainty of the output serves as a measure of the quality of the solution, allowing us to generate new data where the uncertainty is higher. The idea is depicted in Fig.~\ref{fig:active}.
\begin{figure}[t]
  \centering
  \includegraphics[width=0.9\linewidth]{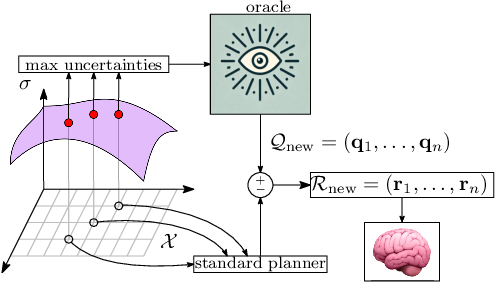}
  \caption{Active learning approach: the uncertainty of the solution is used to generate new data.}
  \label{fig:active}
\end{figure}

The way new data is incorporated differs significantly between the types of regressors used. GPs can easily incorporate new data by updating their training dataset. On the other hand, NNs do not possess the same flexibility when it comes to incorporating new information. 
One common strategy for NNs is fine-tuning, which involves training the model for additional epochs while enriching the original batches with new data.

\section{RESULTS}\label{sec:result}

\subsection{Datasets}
The models designed in Section~\ref{sec:method} are compared across three different mechanical systems, namely an actuated pendulum, a SCARA robot, and a 6-axis manipulator. The datasets of residuals are generated using the Rockit~\cite{gillis2020effortless} OC framework and the IPOPT~\cite{biegler2009large} NLP solver. The dynamics of the pendulum system is modeled using Casadi \cite{Andersson2019}, while the Pinocchio library~\cite{carpentier:hal-03271811} is employed for the robotic manipulators.
The standard (or prior) planner is defined as a cubic polynomial law.

The SCARA dataset was constructed by noting the system's symmetry with respect to the first initial joint, which made it possible to reduce the dataset dimension.

Finally, the UR5e dataset was constructed in a slightly different way. The idea was to stress the proposed paradigm while leveraging prior knowledge of the workspace. In particular, no simplification was made for the first joint symmetry. At the same time, the initial and final configurations were chosen by discretizing the workspace in a grid and then retrieving the joint values by means of Inverse Kinematics (IK). The orientation of the initial and final pose was chosen equal in all the trajectories as well as the configuration along the trajectory.\footnote{The inverse kinematics generally has $8$ solutions for the UR5e away from singularities. We use the same righty - below - noflip.} The details on the number of trajectories, data, and sampling frequency are summarized in Tab.~\ref{tab:dataset}. We will refer to these data as \textit{inside dataset}, while data outside these boundaries will be referred to as the \textit{outside dataset}.

\begin{table}[ht!]
\caption{Details of the training datasets' construction. For the UR5e, the $\vec{q}_j$ quantities are defined as\textsuperscript{*} $(x,\;y,\; z)$ positions of the end effector.}
\label{tab:dataset}
\begin{center}
\begin{tabular}{l c c c }
\hline
 & Pendulum & SCARA & UR5e \\
\hline
\# data & $808$ & $43344$ & $33658$ \\
\# trajectories & $8$ & $144$ & $231$ \\
sampling freq. (Hz) & $100$ & $100$ & $50$ \\
ranges $t_f$ (s) & $[1.0, 1.5]$ & $[2.5, 3.5]$ & $[2.5, 3.0]$ \\
\# samples $t_f$ & $2$ & $3$ & $2$ \\
ranges $\vec{q}_i$ (rad or m) & $[-\pi/4 , \pi/4]$ & $[0,0]$ & $[0.3, 0.5]$ \\
 &   & $\left[ 0,  \frac{3}{4}\pi \right]$ & $ 0.0 \times -0.5$ \\
 &   & & $[0.0, 0.5]$ \\
\# samples $\vec{q}_i$ & $4$ & $1 \times 4$ & $3 \times 3 \times 3$ \\
ranges $\vec{q}_f$ (rad or m) & $[-\pi/4, \pi/4]$ & $[0, \pi ]$ & $[0.3, 0.5]$ \\
 &   & $[0, \frac{3}{4}\pi]$ & $[0.0, 0.5]$ \\
 &   &   & $[0.0, 0.5]$ \\
\# samples $\vec{q}_f$ & $4$ & $4 \times 4$ & $3 \times 3 \times 3$ \\
 add. requirement & $\lvert q_f \rvert > \lvert q_i\rvert$ & $q_{f,2} > q_{i,2}$ & $\sum_i \lvert\Delta x_i\rvert > 0.1$\\
  & $ \lvert \Delta q \rvert > \pi/16$  &   & $z_f - z_i>0$ \\
\hline
\end{tabular}
\end{center}
\footnotesize\textsuperscript{*} with abuse of notation
\end{table}

The dynamic parameters of the first two systems are in the \texttt{supplementary} material. For the 6-axis robot, the dynamical parameters were taken from the company's official \texttt{urdf}, while the friction parameters were taken from \cite{clochiatti2024electro}, averaging forward and inverse parameters.

\subsection{Regressors settings}

The hyperparameters for the three test cases are summarized in Table~\ref{tab:hyperNN} for the NNs; in all cases, AdamW  is used as optimizer \cite{loshchilov2017decoupled} and the activation functions are $\tanh$. The ensemble has $6$~models for the UR5e case and $10$~for the other two. For the GPs, we used a Stochastic Variational Gaussian Process (SVGP) due to the large amount of data; the Adam optimizer was used with a learning rate of $10^{-2}$ for $100$ epochs, and $100$~inducing points were selected. The number of samples was $100$ for the pendulum and $10$ for the other two cases. During training, we split the data into $80\%$ for training and $20\%$ for testing. Timing results are obtained using a laptop equipped with an AMD R9 8945HS CPU, 32~GB of RAM, and an NVIDIA RTX 4060 GPU. The NNs are developed using PyTorch~\cite{paszke2019pytorch}, while the GPs are implemented using GPyTorch~\cite{gardner2018gpytorch}.

\begin{table}[ht!]
\caption{Common hyperparameters of the NNs. LR is for learning rate, WD for weight decay, HL for hidden layer, AF for activation function.}
\label{tab:hyperNN}
\begin{center}
\begin{tabular}{l c c c c }
\hline
Test case & LR & WD & \# HLs & HL width\\
\hline
Pendulum &$10^{-3}$ & $10^{-2}$ & $2$ & $50$ \\
SCARA &$10^{-3}$ & $10^{-3}$ & $2$ & $50$ \\
UR5e &$10^{-3}$ & $10^{-1}$ & $2$ & $100$ \\
\hline
\end{tabular}
\end{center}
\end{table}

\subsection{Discussion}
\subsubsection{Ablation study}\label{sssec:ablation}
First of all, to evaluate the superiority of the residual learning paradigm, an ablation study was performed. This study aims to demonstrate that without a proper prior and accurate regressor design, results become unfeasible. Specifically, Fig.~\ref{fig:abl_ex} illustrates a trajectory example trained with a vanilla NN and a vanilla GP, showing boundary violations \textbf{on training data}. Statistical evidence is provided in Fig.~\ref{fig:abl_stats}, where the two models were evaluated across the entire dataset. On average, the violations inside the dataset, considering only the position, amount to about $0.2$~rad for the NN and $0.3$~rad for the GP, suggesting a lack of feasibility.

\begin{figure}[t!]
  \centering
  \includegraphics[width=\linewidth]{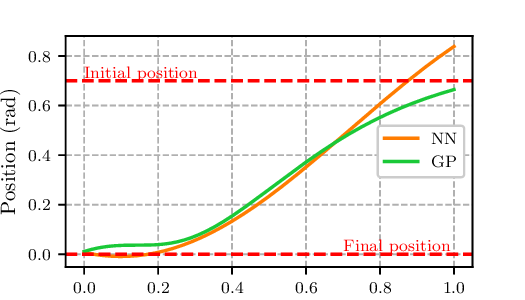}
  \caption{Example trajectories of the NN and GP naive models. The violations of the boundary conditions are evident.}
  \label{fig:abl_ex}
\end{figure}

\begin{figure}[t!]
  \centering
  \includegraphics[width=\linewidth]{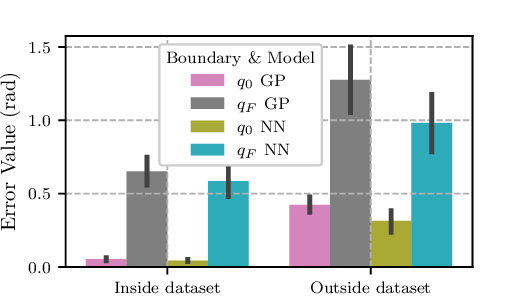}
  \caption{Violations of the boundaries inside and outside the dataset for the NN and GP naive models.}
  \label{fig:abl_stats}
\end{figure}

\subsubsection{Sampling effect}
The energy savings for the pendulum case compared to the standard solution are shown in Fig.~\ref{fig:pendulum}. The figure displays the best sample from both the NN ensemble and GP sampling, along with the mean results $\mu_\textrm{NN}$ and $\mu_\textrm{GP}$. It is worth noting that sampling improves model performance. A summary is reported in Tab.~\ref{tab:savingsamples}.

\begin{table}[ht!]
\caption{Savings with respect to the standard solution inside and outside the dataset for the pendulum.}
\label{tab:savingsamples}
\begin{center}
\begin{tabular}{c c c c c c }
\hline
Dataset & OCP & NN & $\mu_\textrm{NN}$ & GP & $\mu_\textrm{GP}$ \\
\hline
Inside (\%)&$28.5$ & $25.8$ & $25.2$ & $21.8$ & $18.5$ \\
Outside (\%)&$31.6$ & $6.8$ & $5.0$ & $2.8$ & $0.4$\\ 
\hline
\end{tabular}
\end{center}
\end{table}

\begin{figure}[t!]
  \centering
  \includegraphics[width=\linewidth]{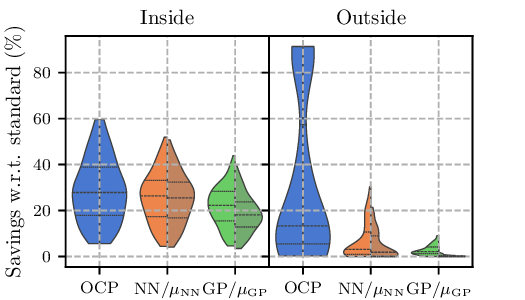}
  \caption{Inside and Outside savings with respect to the standard solution for the pendulum. The right split of the violin represent the output mean of the regressors, showing the advantage of sampling.}
  \label{fig:pendulum}
\end{figure}

\subsubsection{Energy saving capabilities}\label{sssec:energysaving}

Remarkably, both the NN- and GP-based models enable energy savings inside and outside the dataset. A comparison across the robot test cases is depicted in Fig.~\ref{fig:energyoverall}, alongside the detailed results for the pendulum in Fig.~\ref{fig:pendulum}. Overall, the NN-based model performs slightly better than the GP-based model. Further discussion on the comparison of the two models is provided in Section~\ref{sssec:active}.

\begin{figure}[t!]
  \centering
  \includegraphics[width=\linewidth]{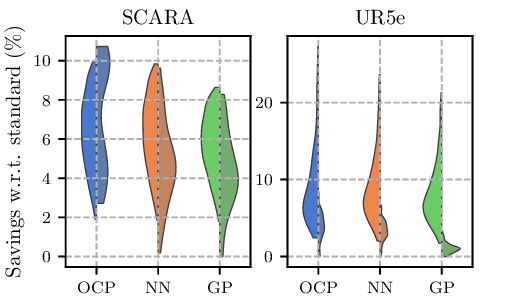}
  \caption{Energy savings with respect to the standard solution for the manipulators. Brighter data points correspond to inside dataset savings, while darker ones represent outside dataset savings.}
  \label{fig:energyoverall}
\end{figure}

\subsubsection{Real-time capability}\label{sssec:realtime}
The times required to compute the solution are reported in Fig~\ref{fig:timingoverall}, where the real-time limit is set as a tenth of the average task time $t_f$. Notably, both models guarantee real-time capabilities for the UR5e case.

\begin{figure}[t!]
  \centering
  \includegraphics[width=\linewidth]{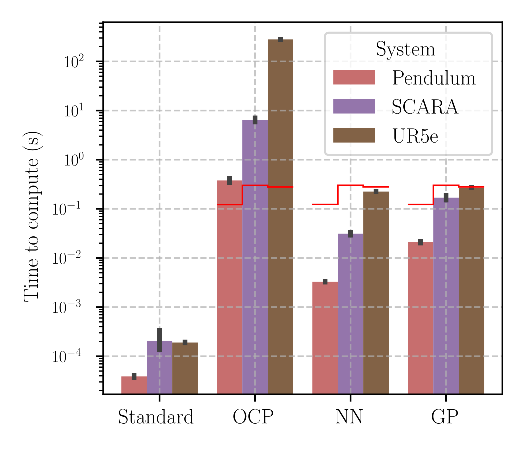}
  \caption{Computation times for each model in seconds. The piecewise constant \textcolor{red}{red} line represents the real-time limit, defined as one-tenth of the total task time, $t_f$.}
  \label{fig:timingoverall}
\end{figure}

\subsubsection{Active Learning}\label{sssec:active}
As discussed in Section~\ref{sssec:energysaving}, the NN-based model slightly outperforms the GP-based one. The main difference between the two models, as introduced in Section~\ref{ssec:active}, is in the way they can incorporate new data. For this reason, the $4$ cases with the higher uncertainty of the outside dataset were used to update the models, for the pendulum system. The savings \textit{before} and \textit{after} the AL phase are reported in Tab.~\ref{tab:activelearning}.

\begin{table}[ht!]
\caption{Savings with respect to the standard solution before and after the active learning phase.}
\label{tab:activelearning}
\begin{center}
\begin{tabular}{l c c c}
\hline
Case & pre-AL (\%) & post-AL (\%) & OCP reference (\%) \\
\hline
GP  &  2.98 & 50.93 & 53.84  \\
NN & 17.63 & 49.65 & 55.46 \\
\hline
\end{tabular}
\end{center}
\end{table}

Notably, the GP can incorporate data more efficiently, and is much easier since the ensemble of NNs requires additional training for each model. In particular, $200$ more epochs with a learning rate of $10^{-4}$ were performed for each model. On the other hand, before AL, the NN-based model performs better. This suggests that depending on the deployment phase, one can choose one over the other.

\subsubsection{Examples}
Simulated examples are provided to illustrate differences between the two regressors. In addition to the regressor solutions, the prior (Cubic) and the OCP solution are shown. Within the dataset, the regressors closely follow the OCP solution, while outside the dataset, greater variability is observed, with the Gaussian Process (GP) solution tending toward the prior. The results are reported for each testcase, both inside and outside the dataset, and for each model; in Fig.\ref{fig:comparison_pend} for the pendulum, and in Fig.\ref{fig:comparison_robot} for the SCARA and UR5e robots.

\begin{figure*}[t]
\centering
\includegraphics[width=\textwidth]{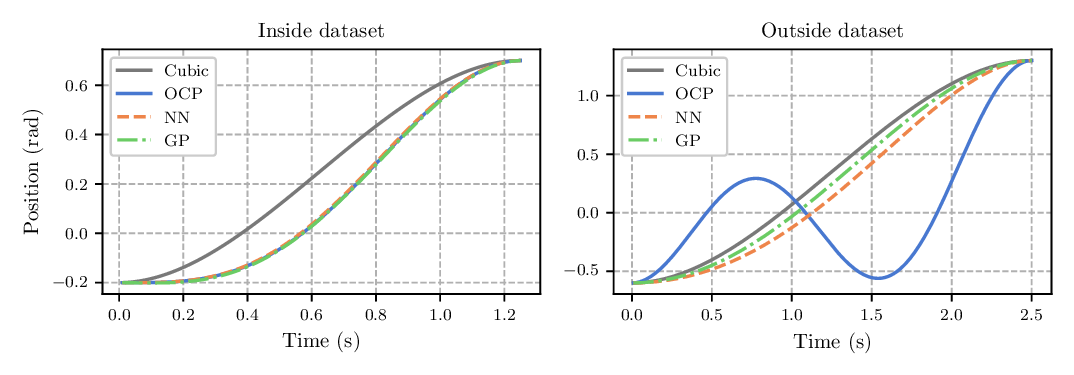}
\caption{Comparison of example trajectories for the pendulum, both within and outside the dataset. Inside the dataset, the regressor accurately captures the solution, while outside the dataset, the complexity of the solution is less represented. Notably, the Gaussian Process (GP) solution tends to align with the prior.}
\label{fig:comparison_pend}
\end{figure*}

\begin{figure*}[t]
\centering
\includegraphics[width=\textwidth]{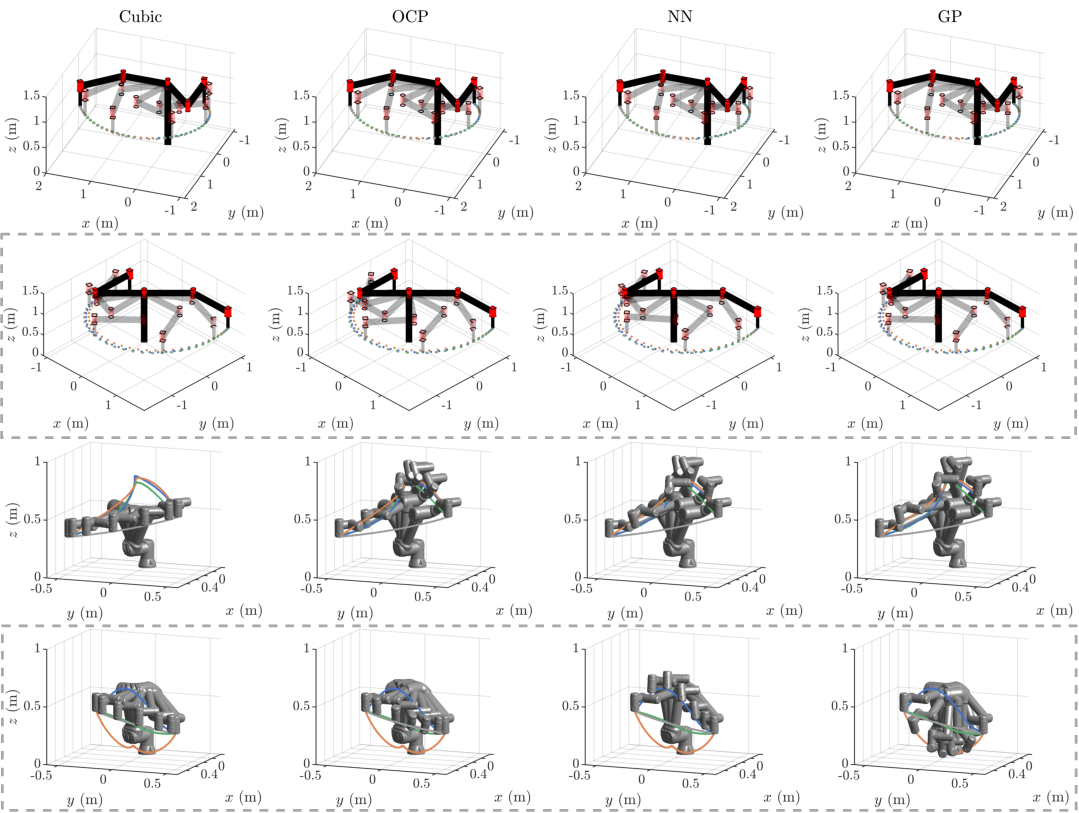}
\caption{Comparison of example trajectories for the robotic manipulators. The grey dashed boxes indicate regions outside the dataset.}
\label{fig:comparison_robot}
\end{figure*}


\section{CONCLUSIONS}\label{sec:conclu}

In this paper, by focusing only in the residual between an optimal planner solution and a given standard planner, the objective of learning optimal solutions has been achieved. Through numerical simulation, the advantage of the proposed paradigm has been shown to be twofold. Firstly, it possible to embed boundary conditions and secondly reduce the quantity of training data required.

By using variational regressors, it is possible to select the best output from the samples and using uncertainty information to upgrade the training dataset.
Between the two regressor frameworks used, Gaussian Processes turned out to be more flexible in incorporating new data during the active learning phase, while the ensemble of neural networks demonstrated better results before adding new evidence.

Future work will consider the inclusion of other types of constraints, such as torque or speed limits, to make the paradigm applicable in more demanding situations.

\bibliographystyle{IEEEtran}  
\bibliography{main}


\section*{APPENDIX}\label{apdx}
This section demonstrates that a GP with a kernel in the form of \eqref{eq:customkernel} respects propriety \eqref{eq:therule}.
First of all, it is useful to prove the following Lemma.
\begin{lemma}\label{lemma:1}
    The derivatives of the symmetric function \eqref{eq:customkernel} $k_{10}$ and $k_{11}$ are both zero for $\xi = 0$ and $\xi = 1$.
\end{lemma}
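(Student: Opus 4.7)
The plan is to prove this by direct computation using the product rule, leveraging the fact that the scaling function $s(\xi) = \xi^2(1-\xi)^2$ vanishes together with its first derivative at the two boundary points. First, I would verify explicitly that $s(0) = s(1) = 0$ and, since $s'(\xi) = 2\xi(1-\xi)(1-2\xi)$, also $s'(0) = s'(1) = 0$. This is the fundamental property that makes the construction work.

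Interpreting the notation in the usual GP-with-derivatives sense, $k_{10}(\xi_1,\xi_2) = \partial k/\partial \xi_1$ and $k_{11}(\xi_1,\xi_2) = \partial^2 k/(\partial \xi_1 \partial \xi_2)$. Applying the product rule to $k(\xi_1,\xi_2) = s(\xi_1)\, k_g(\xi_1,\xi_2)\, s(\xi_2)$, one obtains
\begin{equation*}
k_{10}(\xi_1,\xi_2) = s'(\xi_1)\, k_g(\xi_1,\xi_2)\, s(\xi_2) + s(\xi_1)\, \partial_{\xi_1} k_g(\xi_1,\xi_2)\, s(\xi_2),
\end{equation*}
and an analogous four-term expansion for $k_{11}$ in which every summand carries a factor taken from the set $\{s(\xi_1), s'(\xi_1)\}$ multiplied by a factor from $\{s(\xi_2), s'(\xi_2)\}$, with the middle factor being $k_g$ or one of its partial derivatives.

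The key observation is then that at any boundary value $\xi_1 \in \{0,1\}$ both $s(\xi_1)$ and $s'(\xi_1)$ vanish, so every term in $k_{10}$ and $k_{11}$ becomes zero; the same holds when evaluated at $\xi_2 \in \{0,1\}$. Hence $k_{10}$ and $k_{11}$ both vanish at the boundaries, which is what the lemma asserts. I would assume throughout that $k_g$ is sufficiently smooth so that the stated partial derivatives exist and are bounded near the boundary, which is trivially the case for standard kernels like the RBF.

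This proof is essentially bookkeeping, so I do not expect any real obstacle; the only subtle point is making clear that \emph{every} term in the expansions contains at least one of $s(\xi_1), s'(\xi_1), s(\xi_2), s'(\xi_2)$ as a factor, which is what guarantees the boundary vanishing regardless of the behavior of $k_g$ itself. This lemma will then serve as the stepping stone to show in the main appendix proof that the posterior mean and samples of the GP inherit the boundary constraints in \eqref{eq:therule}.
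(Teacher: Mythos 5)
Your proof is correct and follows essentially the same route as the paper's: expand $k_{10}$ and $k_{11}$ via the product rule and observe that every term carries a factor of $s$ or $s'$ evaluated at the boundary, both of which vanish since $s(\xi)=\xi^2(1-\xi)^2$ has double roots at $0$ and $1$. The only cosmetic difference is that you write $s'$ in factored form while the paper expands it as $4\xi^3-6\xi^2+2\xi$; your explicit remark that every summand contains such a factor is a slightly cleaner statement of the same argument.
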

\begin{proof}
    Given the symmetric function $k(\vec{x}_1, \vec{x}_2)$, where $\vec{x}_i = (\xi_i, \vec{q}_0, \vec{q}_f, t_f)$, defined as:
    \begin{equation}
        k(\vec{x}_1, \vec{x}_2) = s(\xi_1) s(\xi_2) k_\textup{RBF}(\vec{x}_1, \vec{x}_2)
    \end{equation}
    where $k_\textup{RBF} = \exp \left( -\frac{\lVert \vec{x}_1 - \vec{x}_2 \rVert_2^2}{\ell^2} \right)$ is the RBF kernel, and $s(\cdot)$ is the \textit{scaling function} defined in \eqref{eq:scaling}. Given the following notation for the derivatives of the symmetric function:
    \begin{equation}\label{eq:k10}
        \begin{gathered}
            k_{10}(\vec{x}_1, \vec{x}_2) = \frac{\partial k(\vec{x}_1, \vec{x}_2)}{\partial \xi_1} \\
            k_{11}(\vec{x}_1, \vec{x}_2) = \frac{\partial^2 k(\vec{x}_1, \vec{x}_2)}{\partial \xi_1 \partial \xi_2}
        \end{gathered}
    \end{equation}
    The expression for $k_{10}$ is:
    \begin{equation}
        k_{10} = s'(\xi_1) s(\xi_2) k_\textup{RBF} + s(\xi_1) s(\xi_2) \frac{\partial}{\partial \xi_1} k_\textup{RBF} 
    \end{equation}
    and for $k_{11}$ is:
    \begin{multline}\label{eq:k11}
        k_{11} =  s'(\xi_1) s'(\xi_2) k_\textup{RBF} +  s'(\xi_1) s(\xi_2) \frac{\partial}{\partial \xi_2} k_\textup{RBF} + \\
        + s(\xi_1) s'(\xi_2) \frac{\partial}{\partial \xi_1} k_\textup{RBF} + s(\xi_1) s(\xi_2) \frac{\partial^2}{\partial \xi_1 \partial \xi_2} k_\textup{RBF} 
    \end{multline}
    where $s'(\xi) = 4 \xi^3 - 6 \xi^2 + 2 \xi$ which equals zero for both $\xi = 0$ and $\xi = 1$. Given that $s(0)=s(1)=s'(0)=s'(1)=0$, from \eqref{eq:k10} and \eqref{eq:k11}, it follows that $k_{10}$ and $k_{11}$ are both zero.
\end{proof}

\noindent Now it is possible to prove the following Theorem.
\begin{theorem}
    A symmetric function of the type \eqref{eq:customkernel} generates a family of functions that satisfy \eqref{eq:therule}.
\end{theorem}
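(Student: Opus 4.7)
The plan is to show that every sample path $f$ drawn from a GP with kernel \eqref{eq:customkernel} satisfies $f(0) = f(1) = f'(0) = f'(1) = 0$ almost surely. My strategy exploits the fact that $f(\xi)$ is, at each fixed $\xi$, a Gaussian random variable with mean $\mu(\xi)$ and variance $k(\xi,\xi)$, so it suffices to show that the marginal mean and variance, together with their first derivatives in $\xi$, all vanish at $\xi = 0$ and $\xi = 1$; a Gaussian with zero mean and zero variance is almost surely zero.

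For the variance, I would evaluate \eqref{eq:customkernel} directly at $\xi_1 = \xi_2 \in \{0,1\}$, obtaining $k(\xi,\xi) = s(\xi)^2 k_g(\xi,\xi) = 0$ since $s(0) = s(1) = 0$. For the derivative process, I would use the standard identity $\mathrm{Var}[f'(\xi)] = k_{11}(\xi,\xi)$ and invoke Lemma~\ref{lemma:1}. Together these give $\mathrm{Var}[f(0)] = \mathrm{Var}[f(1)] = \mathrm{Var}[f'(0)] = \mathrm{Var}[f'(1)] = 0$, so each of $f(0), f(1), f'(0), f'(1)$ is almost surely equal to its mean.

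For the mean, assuming a zero-mean GP prior on the residual (which is natural since the standard planner already encodes the boundary values), the posterior mean admits the representation $\mu(\xi) = \sum_j \alpha_j\, k(\xi, \xi_j)$, where the $\xi_j$ range over training or inducing points and the $\alpha_j$ are determined by the data. Every basis function carries the factor $s(\xi)$ inherited from \eqref{eq:customkernel}, so $\mu(0) = \mu(1) = 0$. Differentiating termwise gives $\mu'(\xi) = \sum_j \alpha_j\, k_{10}(\xi, \xi_j)$, which vanishes at $\xi = 0, 1$ by Lemma~\ref{lemma:1}. Combining zero mean with zero variance at each boundary point, and using $\mathrm{Var}[f(\xi_0)] = 0 \Rightarrow f(\xi_0) = \mu(\xi_0)$ a.s., completes the proof.

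The main obstacle is the derivative case. The value-at-boundary statement is nearly immediate once $s$ has been factored into the kernel, but the variance of $f'$ and the derivative of the posterior mean are expressed through the mixed partials $k_{10}$ and $k_{11}$, and one must certify that the kernel construction makes these vanish at $\xi = 0, 1$ as well. This is precisely what Lemma~\ref{lemma:1} was set up to deliver, so the bulk of the theorem reduces to invoking it at the right places.
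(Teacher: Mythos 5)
Your proof is correct and follows essentially the same route as the paper's: both arguments reduce to showing that the mean and (co)variance of the process and of its derivative vanish at $\xi=0,1$, using the factored form of \eqref{eq:customkernel} for the values and Lemma~\ref{lemma:1} for the mixed partials $k_{10}$, $k_{11}$. The only cosmetic difference is that you argue the variance from the prior kernel $k(\xi,\xi)=s(\xi)^2k_g(\xi,\xi)$ (which dominates the posterior variance) and the mean from the representer form $\mu(\xi)=\sum_j\alpha_j k(\xi,\xi_j)$, whereas the paper writes out the posterior mean and covariance formulas explicitly; both are sound.
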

\begin{proof}
    We consider some training points consistent with the BCs, $\{\vec{X}, \vec{y} \}$, as well as the vector of test points $\hat{\vec{x}} = [\hat{\vec{x}}_0, \hat{\vec{x}}_1]$, where $\hat{\vec{x}}_i = [i,\, \mathcal{X}], \; i = 0,1$. The GP mean $\vec{\mu}$ at the two test points $\hat{\vec{x}}$ can be computed as:
    \begin{equation}
        \vec{\mu} = \vec{K}(\hat{\vec{x}},\vec{X}) (\vec{K}(\vec{X},\, \vec{X})+ \mathbb{1}_n \sigma_n^2)^{-1} \vec{y}
    \end{equation}
    We will show that $\vec{\mu} = \vec{0}$ by showing $\vec{K}(\vec{X},\, \hat{\vec{x}}) = \mathbb{0}$, i.e. the \textit{cross-correlation} between the training points and the test points is zero. From \eqref{eq:customkernel}, it is evident that the kernel is zero $\forall \vec{X}$ with respect to the test points $\hat{\vec{x}}$. The same holds for the covariance matrix $\vec{\Sigma}$ by using its definition:
    \begin{equation}
      \scalebox{0.9}{$
        \vec{\Sigma} = \vec{K} (\vec{\hat{\vec{x}}},\hat{\vec{x}}) - \vec{K}(\hat{\vec{x}},\vec{X}) (\vec{K}(\vec{X},\, \vec{X})+ \mathbb{1}_n \sigma_n^2)^{-1}\vec{K}(\hat{\vec{x}},\vec{X}) $}
    \end{equation}
   Using the fact that $\vec{K}(\hat{\vec{x}},\vec{X}) = \mathbb{0}$ and $\vec{K} (\vec{\hat{\vec{x}}},\hat{\vec{x}}) = \mathbb{0}$, also $\vec{\Sigma}$ is zero. This shows that the samples converge to the mean at the boundaries, since:
    \begin{equation}
        \vec{y}_\textup{s} = \vec{L} \vec{z} + \vec{\mu}
    \end{equation}
    where $\vec{y}_\textup{s}$ denotes the samples at the boundaries, $\vec{\mu}$ is the mean at the boundaries, and $\vec{L}$ is the lower triangular matrix from the Cholesky decomposition of $\vec{\Sigma}$, and $\vec{z} \sim \mathcal{N}(\vec{0}, \mathbb{1})$ is a Gaussian distributed random vector. Since $\vec{\Sigma} = \mathbf{0}$, $\vec{L} = \mathbf{0}$.

    For derivative information, it is sufficient to use Lemma~\ref{lemma:1} and observe that $\vec{K}_{11}(\hat{\vec{x}}, \hat{\vec{x}})$ and $\vec{K}_{10}(\hat{\vec{x}}, \vec{X})$ are both zero for $\xi = 0$ and $\xi = 1$; both the mean and the covariance of the derivative are zero:
    \begin{equation}
    \scalebox{0.8}{$
        \begin{gathered}
            \vec{\mu}' = \vec{K}_{10}(\hat{\vec{x}}, \vec{X}) (\vec{K}(\vec{X},\, \vec{X})+ \mathbb{1}_n \sigma_n^2)^{-1} \vec{y}, \\
            \vec{\Sigma}' = \vec{K}_{11}(\hat{\vec{x}}, \hat{\vec{x}}) - \vec{K}_{10}(\hat{\vec{x}}, \vec{X}) (\vec{K}(\vec{X},\, \vec{X})+ \mathbb{1}_n \sigma_n^2)^{-1} \vec{K}_{01}(\vec{X}, \hat{\vec{x}}).
        \end{gathered}$}
    \end{equation}
    
\end{proof}


\addtolength{\textheight}{-12cm}   


\end{document}